\newif\ifsup\suptrue
\newtheorem{theorem}{Theorem}
\newtheorem{remark}{Remark}
\DeclareMathOperator*{\argmax}{arg\,max\,}
\DeclareMathOperator*{\argmin}{arg\,min\,}
\mathchardef\mhyphen="2D
\newcommand{\cA}{\mathcal{A}}
\newcommand{\cB}{\mathcal{B}}
\newcommand{\cD}{\mathcal{D}}
\newcommand{\cN}{\mathcal{N}}
\newcommand{\R}{\mathds{R}}
\newcommand{\thetauotf}{\hat{\theta}^{\textsc{wf}}}
\newcommand{\thetacotf}{\hat{\theta}^{\textsc{w}}}
\newcommand{\LinUCB}{{\tt LinUCB}}
\newcommand{\uOTF}{{\textsc{wf-otf}}}
\newcommand{\OTFLinUCB}{{\tt OTFLinUCB}}
\newcommand{\OTFLinTS}{{\tt OTFLinTS}}
\newcommand{\inner}[1]{\left\langle #1 \right\rangle}
\newcommand{\shortinner}[1]{\langle #1 \rangle}
\renewcommand{\R}{\mathds{R}}
\newcommand{\N}{\mathds{N}}
\newcommand{\norm}[1]{\left\Vert #1 \right\Vert}
\newcommand{\snorm}[1]{\Vert #1 \Vert}
\newcommand{\E}{\mathds E}
\newcommand{\PP}{\mathds P}
\icmltitlerunning{Linear Bandits with Stochastic Delayed Feedback}
\begin{document}

\twocolumn[
\icmltitle{Linear Bandits with Stochastic Delayed Feedback}




\begin{icmlauthorlist}
\icmlauthor{Claire Vernade}{dm}
\icmlauthor{Alexandra Carpentier}{unimag}
\icmlauthor{Tor Lattimore}{dm}
\icmlauthor{Giovanni Zappella}{ama}
\icmlauthor{Beyza Ermis}{ama}
\icmlauthor{Michael Brueckner}{ama}
\end{icmlauthorlist}

\icmlaffiliation{dm}{DeepMind, London, UK}
\icmlaffiliation{unimag}{Otto-Von-Guericke Universität, Magdeburg, Germany}
\icmlaffiliation{ama}{Amazon, Berlin, Germany}

\icmlcorrespondingauthor{Claire Vernade}{vernade@google.com}

\icmlkeywords{Stochastic Bandits, Delayed Feedback, Regret Minimization, Linear Bandits}

\vskip 0.3in
]



\printAffiliationsAndNotice{}  


\begin{abstract}
Stochastic linear bandits are a natural and well-studied model for structured exploration/exploitation problems and are widely used in applications such as online marketing and recommendation.
One of the main challenges faced by practitioners hoping to apply existing algorithms is that usually the feedback is randomly delayed and delays are only partially observable.
For example, while a purchase is usually observable some time after the display, the decision of not buying is never explicitly sent to the system.
In other words, the learner only observes delayed positive events.
We formalize this problem as a novel stochastic delayed linear bandit and propose $\OTFLinUCB$ and $\OTFLinTS$, two computationally efficient algorithms able to integrate new information as it
becomes available and to deal with the permanently censored feedback. We prove optimal $\tilde O(\smash{d\sqrt{T}})$ bounds on the regret of the first algorithm and study the dependency on delay-dependent parameters.
Our model, assumptions and results are validated by experiments on simulated and real data.
\end{abstract}

\section{Introduction}
\label{sec:introduction}

Content optimization for websites and online advertising are among the main industrial applications of bandit algorithms \cite{chapelle2011empirical,chapelle2014modeling}.
The dedicated services sequentially choose an option among several possibilities and display it on a web page to a particular customer.
In most real world architectures, for each recommendation request,
the features of the products are joined and hashed with those of the current user and provide a (finite) action set included in  $\R^d$.
For that purpose, linear bandits \cite{chu2011contextual,abbasi2011improved} are among the most adopted as they allow to take into account the structure of the space where the action vectors lie.

A key aspect of these interactions through displays on webpages is the time needed by a customer to make a decision and provide feedback to the learning
algorithm, also known as the \emph{conversion indicator} \cite{chapelle2014modeling,DiemertMeynet2017}.
For example, a mid-size e-commerce website can serve hundreds of recommendations per second, but customers need minutes, or even hours, to make a purchase.
In \cite{chapelle2014modeling}, the authors ran multiple tests on proprietary industrial datasets, providing a good example of how delays affect the performance of click-through rate estimation.
They extract 30 days of display advertising data and find that delays are on average of the order of several hours and up to several days.

On the other extreme of the time scale, some companies optimize long-term metrics for customer engagement (e.g., accounting for returned products in the sales results)
which by definition can be computed only several weeks after the bandit has played the action.
Moreover, after a piece of content is displayed on a page, the user may or may not decide to react (e.g., click, buy a product). In the negative case, no signal is sent to the system and the
learner cannot distinguish between actions for which the user did not click and those where they did, but the conversion is delayed. Note that in the use cases we consider, when a feedback is received, the learner is able to attribute it to the past action that triggered it.

Two major requirements for bandit algorithms in order to be ready to run in a real online service are the ability to leverage contextual information and handle
delayed feedback. Many approaches are available to deal with contextual information
\cite{abbasi2011improved,agarwal2014taming,auer2002nonstochastic,neu2015explore,beygelzimer2011contextual,chu2011contextual,zhou2015survey}. Delays
have been identified as a major problem in online applications \cite{chapelle2014modeling}.
We give an overview of the existing literature in Section~\ref{sec:related}.
However, to the best of our knowledge, no algorithm was able to address this problem given the requirements defined above.

\paragraph{Contributions}
Our main contribution is a novel bandit algorithm called \texttt{On-The-Fly-LinUCB} ($\OTFLinUCB$).
The algorithm is based on $\LinUCB$ \cite{abbasi2011improved}, but with confidence intervals and least-squares estimators that are adapted to account for the delayed and censored rewards (\cref{sec:concentration}).
The algorithm is complemented by a regret analysis, including lower bounds (\cref{sec:analysis}).
We then provide a variant inspired by Thompson sampling and Follow the Perturbed Leader (\cref{sec:ts}) and
evaluate the empirical performance of all algorithms in \cref{sec:experiments}.

\section{Learning Setting under Delayed Feedback}
\label{sec:setting}

\paragraph{Notation}
All vectors are in $\R^d$ where $2\leq d<\infty$ is fixed.
For any symmetric positive definite matrix $M$ and vector $x\in \R^d$, $\|x\|_M = \sqrt{x^T M x}$ and $\norm{x}_2=\|x\|_I$ is the usual $L_2$-norm of $x$, where $I$ denotes the identity matrix in dimension $d$.

\paragraph{Learning setting}
Our setup involves a learner interacting with an environment over $T$ rounds.
The environment depends on an unknown parameter $\theta \in \R^d$ with $\norm{\theta}_2 \leq 1$ and a delay distribution $\cD$ supported on the natural numbers.
Note, in contrast to \citet{vernade2017stochastic}, we do not assume the learner knows $\cD$.
Then, in each round $t$,
\begin{enumerate}
\item The learner receives from the environment a finite set of actions $\cA_t \subset \R^d$ with $|\cA_t| = K_t < \infty$ and $a^\top \theta \in [0,1]$ and $\norm{a}_2 \leq 1$ for all $a \in \cA_t$.
\item The learner selects an action $A_t$ from $\cA_t$ based on information observed so far.
\item The environment samples a reward $X_t \in \{0,1\}$ and a delay $D_t \in \N$ that are \textit{partially} revealed to the learner and where:
\subitem 3.a.\, $X_t \sim \cB(A_t^\top \theta)$.
\subitem 3.b.\, $D_t$ is sampled independently of $A_t$ from $\cD$.
\item Certain rewards resulting from previous actions are revealed to the learner. For $s \leq t$, let $C_{s,t} = \mathds{1}\{D_s \leq t - s\}$, which is
called the censoring variable and indicates whether or not the reward resulting from the decision in round $s$ is revealed by round $t$. Then let
$Y_{s,t} = C_{s,t} X_s$. The learner observes the collection $\{Y_{s,t} : s \leq t\}$ at the end of round $t$. If $X_t =1$, we say that the action $A_t$ converts.
\end{enumerate}
The delays in combination with the Bernoulli noise model and censored observations introduces an interesting structure.
When $X_s = 0$, then $Y_{s,t} = 0$ for all $t \geq s$, but $Y_{s,t} = 0$ is also possible when $X_s = 1$, but the reward from round $s$ has been delayed sufficiently. On the other hand, if $Y_{s,t} = 1$, the learner can immediately deduce that $X_s = 1$.

The goal of the learner is to sequentially minimize the cumulative regret after $T$ rounds, which is 
\begin{equation}
  \label{eq:regret}
R(T, \theta) = \sum_{t=1}^T \inner{\theta, A^*_t}-\inner{\theta, A_t},
\end{equation}
where $A_t^* = \argmax_{a\in \mathcal A_t} \inner{\theta, a}$ is the action that maximises the expected reward in round $t$.

\begin{remark}
The assumption that $a^\top \theta \in [0,1]$ for all $a \in \cA_t$ ensures the reward is well defined. A natural alternative is to replace the linear model with a generalized linear model. Our algorithms and analysis generalize to this setting in the natural way using the techniques of \cite{filippi2010parametric, jun2017scalable}. For simplicity, however, we restrict our attention to the linear model.
The assumption that $\norm{\theta}_2 \leq 1$ and $\norm{a}_2 \leq 1$ for all actions are quite standard and the dependence of our results on alternative bounds is relatively mild.
\end{remark}

\section{Concentration for Least Squares Estimators with Delays}
\label{sec:concentration}
This section is dedicated to disentangling the delays and the reward estimation.
The combination of an unknown delay distribution and censored binary rewards makes it hopeless to store all past
actions and wait for every conversion. For this reason our algorithm depends on a parameter $m$. If a reward has not
converted within $m$ rounds, the algorithm assumes it will never convert and ignores any subsequent signals related
to this decision. There is also a practical advantage, which is that the learner does not need to store individual
actions that occurred more than $m$ rounds in the past.
Define
\begin{align*}
    \tilde Y_{s,t}
    = Y_{s,t} \mathds{1}\{D_s \leq m\}
    &= X_s \mathds{1} \{ D_s \leq \min (m, t-s) \}\,,
\end{align*}
which is the same as $Y_{s,t}$ except rewards that convert after more than $m$ rounds are ignored.
The learner then uses $\tilde Y_{s,t}$ to estimate a parameter that is proportional to $\theta$ using $L_2-$regularized least squares. Let $\lambda > 0$ be a regularization parameter and define
\begin{equation}
    \label{eq:cOTF}
    \thetacotf_t := \left(\sum_{s=1}^{t-1} A_s A_s^\top +\lambda I\right)^{-1} \left( \sum_{s=1}^{t-1} \tilde{Y}_{s,t} A_s \right) := V_t(\lambda)^{-1} B_t\,.
\end{equation}

We now state our main deviation result.

\begin{theorem}
\label{th:cOTF-concentration}
Let $\tau_m = \mathds{P}(D_1 \leq m)$ and $\delta \in (0,1)$. Then the following holds for all $t \leq T$ with probability at least $1 - 2\delta$.
\begin{equation}
\label{eq:w-otf-concentration}
\| \thetacotf_t - \tau_m \theta \|_{V_t(\lambda)} \leq 2f_{t,\delta} + \sum_{s=t-m}^{t-1} \norm{A_s}_{V_t^{-1}(\lambda)} \,,
\end{equation}
where
\begin{equation}
    \label{eq:explo_rate}
    f_{t,\delta} = \sqrt{\lambda} + \sqrt{2\log\left(\frac{1}{\delta} \right) + d\log \left( \frac{d\lambda + t}{d\lambda } \right)}\,.
\end{equation}
\end{theorem}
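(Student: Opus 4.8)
The plan is to write $\thetacotf_t-\tau_m\theta=V_t(\lambda)^{-1}B_t-\tau_m\theta$ and peel off, in order, the regularization bias, a permanently-censored ``recent window'', and a standard self-normalized noise term. Using $V_t(\lambda)\tau_m\theta=\lambda\tau_m\theta+\sum_{s=1}^{t-1}\tau_m(A_s^\top\theta)A_s$ and subtracting,
\[
\thetacotf_t-\tau_m\theta=-\tau_m\lambda\,V_t(\lambda)^{-1}\theta+V_t(\lambda)^{-1}\sum_{s=1}^{t-1}\bigl(\tilde Y_{s,t}-\tau_m A_s^\top\theta\bigr)A_s .
\]
Taking $\|\cdot\|_{V_t(\lambda)}$ and using $\|V_t(\lambda)^{-1}v\|_{V_t(\lambda)}=\|v\|_{V_t^{-1}(\lambda)}$, the first term equals $\tau_m\lambda\|\theta\|_{V_t^{-1}(\lambda)}\le\sqrt\lambda$ because $V_t(\lambda)^{-1}\preceq\lambda^{-1}I$, $\tau_m\le1$ and $\|\theta\|_2\le1$; this is the ``$\sqrt\lambda$'' inside $f_{t,\delta}$. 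It remains to bound $\|\sum_{s<t}(\tilde Y_{s,t}-\tau_m A_s^\top\theta)A_s\|_{V_t^{-1}(\lambda)}$.

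Here I would split $\tilde Y_{s,t}=X_s\mathds 1\{D_s\le m\}-X_s\mathds 1\{t-s<D_s\le m\}$. The second summand vanishes unless $t-m<s<t$ and has absolute value at most $1$, so its contribution is $\le\sum_{s=t-m}^{t-1}\|A_s\|_{V_t^{-1}(\lambda)}$ by the triangle inequality — exactly the correction term in \eqref{eq:w-otf-concentration}. Writing $\tilde Y_s:=X_s\mathds 1\{D_s\le m\}$ (the ``eventual'' censored reward, which equals $\tilde Y_{s,t}$ for every $s\le t-m$), what is left is $\|\sum_{s<t}(\tilde Y_s-\tau_m A_s^\top\theta)A_s\|_{V_t^{-1}(\lambda)}$, and this is a standard self-normalized quantity.

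Indeed, let $\eta_s:=\tilde Y_s-\tau_m A_s^\top\theta$ and $\mathcal F_s:=\sigma(A_1,X_1,D_1,\dots,A_s,X_s,D_s)$ (with the exogenous action sets adjoined). Since $A_s$ is $\mathcal F_{s-1}$-measurable and, conditionally on $A_s$, the variables $X_s$ and $D_s$ are independent and drawn from $\cB(A_s^\top\theta)$ and $\cD$, we get $\E[\tilde Y_s\mid\mathcal F_{s-1}]=\E[X_s\mid\mathcal F_{s-1}]\,\E[\mathds 1\{D_s\le m\}\mid\mathcal F_{s-1}]=\tau_m A_s^\top\theta$, so $(\eta_s)$ is a martingale difference sequence with predictable regressor $A_s$; moreover $\eta_s=X_s\mathds 1\{D_s\le m\}-\tau_m A_s^\top\theta$ lies in an interval of length $1$ and is therefore conditionally $1$-sub-Gaussian. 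Applying the self-normalized tail bound of \citet{abbasi2011improved} with regularizer $\lambda$ gives, on an event of probability $\ge1-\delta$ and simultaneously for all $t$, $\|\sum_{s<t}\eta_s A_s\|_{V_t^{-1}(\lambda)}\le\sqrt{2\log(1/\delta)+\log(\det V_t(\lambda)/\lambda^d)}$, and $\det V_t(\lambda)\le((d\lambda+t)/d)^d$ by AM--GM on the eigenvalues of $V_t(\lambda)$ (using $\operatorname{tr}\sum_{s<t}A_sA_s^\top\le t-1$). Collecting the three pieces by the triangle inequality yields \eqref{eq:w-otf-concentration} — in fact with $f_{t,\delta}$ in place of $2f_{t,\delta}$ and probability $1-\delta$; the stated form is recovered with room to spare if instead one splits $\eta_s$ into a reward part $(X_s-A_s^\top\theta)\mathds 1\{D_s\le m\}$ and a delay part $A_s^\top\theta(\mathds 1\{D_s\le m\}-\tau_m)$ and applies the concentration bound to each on its own event.

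The only steps requiring genuine care are (i) the measurability bookkeeping — that $A_t$ is predictable for the natural filtration, which is standard once the action sets are treated as exogenous, and that the censoring indicator $\mathds 1\{D_s\le m\}$ passes through the conditional expectation by the independence of $X_s$ and $D_s$; and (ii) the uniformity over $t\le T$, which is automatic from the self-normalized inequality, since the only dependence of $B_t$ on $t$ beyond the number of summands sits in the length-$m$ window, which is controlled deterministically. The genuinely nontrivial idea is the decomposition $\tilde Y_{s,t}=\tilde Y_s-(\text{bounded window term})$, which isolates an ``oracle'' sum on which the delay distribution no longer matters; everything after that is standard $\LinUCB$-style concentration.
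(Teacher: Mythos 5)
Your proof is correct and follows essentially the same route as the paper: the same decomposition of the sum defining $\thetacotf_t$ into the eventually-converting part $\sum_{s<t}A_sX_s\mathds{1}\{D_s\le m\}$ plus a length-$m$ window correction bounded naively by $\sum_{s=t-m}^{t-1}\norm{A_s}_{V_t(\lambda)^{-1}}$, followed by standard self-normalized concentration and the elliptical-potential/determinant bound. The only difference is that you apply the self-normalized inequality once to the combined martingale noise $X_s\mathds{1}\{D_s\le m\}-\tau_m A_s^\top\theta$, which indeed yields the slightly sharper $f_{t,\delta}$ at confidence $1-\delta$, whereas the paper splits this noise into a reward part and a delay part and uses two concentration events, giving the stated $2f_{t,\delta}$ at level $1-2\delta$ --- a recovery you also point out.
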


\begin{proof}
  Let $\eta_s = X_s - A_s^\top \theta$ and $\epsilon_s = \mathds{1}\{D_s \leq m\} - \tau_m$ be the two types of noise affecting our observations, both being centered Bernoulli and independent of the past conditionally on $A_s$.
  By \citep[Theorem 20.4]{lattimore2019book} it holds with probability at least $1 - 2\delta$ that for all $t \leq T$,
  \begin{align}
  \norm{\sum_{s=1}^t A_s X_s \epsilon_s}_{V_t^{-1}}^2 \leq f_{t,\delta}^2, \; \text{and,}\;
  \norm{\sum_{s=1}^t A_s \eta_s}_{V_t^{-1}}^2 \leq f_{t,\delta}^2\,.,
  \label{eq:conc1}
  \end{align}
  where we used that $X_s \in [0,1]$ for the first inequality. We comment on that step further below.

  The next step is to decompose $B_t$ with respect to the value of the censoring variables of the learner. We first rewrite $B_t$ by expliciting the value of the indicator function in each term of its sum:
  \begin{align*}
  B_t &=  \sum_{s=1}^{t-m-1}  A_s X_s \mathds{1} \{D_s \leq m\} \\
  & \quad + \sum_{s=t-m}^{t-1}  A_s X_s \mathds{1} \{D_s \leq t-s\} \\
  & =  \sum_{s=1}^{t-1}  A_s X_s \mathds{1} \{D_s \leq m\}\\
  & \quad  + \sum_{s=t-m}^{t-1}  A_s X_s (\mathds{1} \{D_s \leq t-s\} - \mathds{1}\{D_s \leq m \}),
  \end{align*}
  where we added $m$ terms in the first sum and removed them in the second one.
  The second sum now contains the terms that will eventually convert but have not been received yet.

  Now assume both events in \cref{eq:conc1} hold. Using the decomposition above, we have
  \begin{align*}
  &\| \thetacotf_t - \tau_m \theta \|_{V_t(\lambda)}\\
  & \leq \norm{V_t(\lambda)^{-1}\sum_{s=1}^{t-1}A_s X_s (\tau_m +\epsilon_s)
  - \tau_m \theta }_{V_t(\lambda)} \\
&  + \norm{ \sum_{s=t-m}^{t-1} A_s X_s }_{V_t(\lambda)^{-1}}
  \end{align*}
   The last term can be naively bounded and gives the second term of \cref{eq:w-otf-concentration}.
  We can bound the first term using \cref{eq:conc1}:
  \begin{align*}
    &\norm{V_t(\lambda)^{-1}\sum_{s=1}^{t-1}A_s X_s (\tau_m +\epsilon_s)
    - \tau_m \theta }_{V_t(\lambda)} \\
    &\leq \norm{\sum_{s=1}^{t-1} A_s X_s \epsilon_s}_{V_t(\lambda)^{-1}}   + \norm{\sum_{s=1}^{t-1} A_s \eta_s}_{V_t(\lambda)^{-1}} \\
        & \leq 2f_{t,\delta} \,,
  \end{align*}
  where both the first and second inequalities follow from the triangle inequality applied to $\norm{\cdot}_{V_t^{-1}}$ and the last from the assumption that the events in \cref{eq:conc1} hold.
\end{proof}

\begin{remark}
 The initial step of the proof in Eq~\eqref{eq:conc1} might seem loose but we explain why this term cannot easily be bounded more tightly.
 Note that the variance of $\epsilon_s$ is $\tau_m(1-\tau_m)$ and thus, when applying \citep[Theorem 20.4]{lattimore2019book}, we could obtain a tighter bound by taking it into acount, which would lead to $\tau_m^2 f_{t\delta}^2$.
But  had we included it there, it would have appeared in the expression of the upper bound, i.e. in the algorithm, and the learner would have needed its
knowledge to compute the upper bound. This was the choice made by \cite{vernade2017stochastic}. By removing
it, we pay the price of slightly larger confidence intervals (more exploration) for not having to give prior information
to the learner. We discuss other possible approaches in conclusion.
\end{remark}


\paragraph{Practical considerations}
As we mentioned already, a practical advantage of the windowing idea is that the learner need not store actions for which the feedback
has not been received indefinitely. The cut-off time is often rather long, even as much as 30 days \cite{chapelle2011empirical}.

\paragraph{Choosing the window}
The windowing parameter is often a constraint of the system and the lerner cannot choose it.
Our results show the price of this external censoring on the regret.
If the learner is able to choose $m$ the choice is somewhat delicate.
The learner effectively discards $1 - \tau_m$ proportion of the data, so ideally $\tau_m$ should be large, which corresponds to
large $m$. But there is a price for this. The learner must store $m$ actions and the confidence interval also depends (somewhat mildly) on $m$.
When the mean $\mu = \E_{D \sim \cD}[D]$ of the delay distribution $\cD$ is finite and known, then a somewhat natural choice of the windowing parameter is $m = 2\mu$.
By Markov's inequality this ensures that $\tau_m \geq 1/2$. The result continues to hold if the learner only knows an upper bound on $\mu$.

Precisely how $m$ should be chosen depends on the underlying problem. We discuss this issue in more detail in \cref{sec:analysis} where the regret analysis is provided.

\section{Algorithm}
\label{sec:algorithm}

We are now equipped to present $\OTFLinUCB$, an optimistic linear bandit algorithm that uses concentration analysis from the previous section.
The pseudocode of $\OTFLinUCB$ is given in Algorithm~\ref{alg:otflinucb}.
It accepts as input a confidence level $\delta>0$, a window parameter $m > 0$ and a regularization parameter $\lambda > 0$.
In each round the algorithm computes the estimator $\thetacotf_t$ using \cref{eq:w-otf-concentration}
and for each arm $a \in \cA_t$ computes an upper confidence bound on the expected reward defined by
\begin{equation}
  \label{eq:otf-ucb}
  U_t(a) = \shortinner{a, \thetacotf_t} + \left(2f_{t,\delta} + \sum_{s=t-m}^{t-1} \|A_s\|_{V_t(\lambda)^{-1}}\right) \norm{a}_{V_t(\lambda)^{-1}}\,.
\end{equation}
Then action $A_t$ is chosen to maximize the upper confidence bound:
\[
A_t = \argmax_a U_t(a) \,,
\]
where ties are broken arbitrarily.

\paragraph{Implementation details}
The algorithm needs to keep track of $V_t(\lambda)$ and $B_t$ as defined in \cref{eq:cOTF}.
These can be updated incrementally as actions are taken and information is received. The algorithm also uses $V_t(\lambda)^{-1}$, which can be updated incrementally
using the Sherman-–Morrison formula. In order to recompute $\alpha_{t,\delta}$ the algorithm needs to store the $m$ most recent actions, which are also
used to update $B_t$.

\paragraph{Computation complexity}
The computation complexity is dominated by three operations: (1) Updating $V_t(\lambda)$ and computing its inverse, which takes $O(d^2)$ computation steps using a rank-one update, and
(2) computing the radius of the confidence ellipsoid, which requires $O(m d^2)$ computations, one for each of the last $m$ actions. Finally, (3) iterating over the actions and computing the
upper confidence bounds, which requires $O(K_t d^2)$ computations. Hence the total computation per round is $O((K_t + m) d^2)$.

\paragraph{Space complexity}
The space complexity is dominated by: (1) Storing the matrix $V_t(\lambda)$ and (2) storing the $m$ most recent actions, which are needed to compute the least squares estimator and the upper confidence bound.
Hence, the space complexity is $O(m d + d^2)$.

\paragraph{Improved computation complexity}
Because $V_t(\lambda)$ changes in every round, the radius of the confidence set needs to be recomputed in each round, which requires $O(md^2)$ computations per round.
A minor modification reduces the computation complexity to $O(d^2)$. The idea is to notice that for any $a\in \R^d$ and $s\leq t$,
\begin{align*}
\|a\|_{V_s(\lambda)^{-1}} \geq \|a\|_{V_t(\lambda)^{-1}} \,.
\end{align*}
Hence, one can store a buffer of scalars $\{\|A_s\|_{V_s(\lambda)^{-1}},\, t - m \leq s \leq t-1\}$ at the memory cost of $O(m)$.
This slightly increases the upper confidence bounds, but not so much that the analysis is affected as we discuss in the next section.

\begin{algorithm}
\caption{\OTFLinUCB~}
\label{alg:otflinucb}
\begin{algorithmic}
\STATE {\bf Input:} Window parameter $m>0$, confidence level $\delta>0$ and $\lambda>0$.
\FOR{$t=2,\ldots,T$}
  \STATE Receive action set $\cA_t$
  \STATE Compute width of confidence interval: 
  $$\displaystyle \alpha_{t,\delta} = 2f_{t,\delta} + \sum_{s=t-m}^{t-1} \norm{A_s}_{V_t(\lambda)^{-1}}$$
  \STATE Compute the least squares estimate $\thetacotf_t$ using (\ref{eq:cOTF}) \\[0.3cm]
  \STATE Compute the optimistic action:
  $$\displaystyle A_t = \argmax_{a \in \cA_t} \shortinner{a, \thetacotf_t} + \alpha_{t,\delta} \norm{a}_{V_t(\lambda)^{-1}}$$
  \STATE Play $A_t$ and receive observations
\ENDFOR
\end{algorithmic}

\end{algorithm}

\section{Regret Analysis}
\label{sec:analysis}

Our main theorem is the following high probability upper bound on the regret of $\OTFLinUCB$.
The proof combines the ideas from \cite{abbasi2011improved} with a novel argument to handle the confidence bound in \cref{th:cOTF-concentration}, which has a more
complicated form than what usually appears in the analysis of stochastic linear bandits.

\begin{theorem}\label{thm:upper}
With probability at least $1 - 2 \delta$ the regret of $\OTFLinUCB$ satisfies
\begin{multline*}
R(T, \theta)
\leq \frac{4 f_{n,\delta}}{\tau_m} \sqrt{2dT \log\left(\frac{d \lambda + T}{d\lambda}\right)} \\
+ \frac{4md}{\tau_m} \log\left(\frac{d\lambda + T}{d\lambda}\right)\,.
\end{multline*}
\end{theorem}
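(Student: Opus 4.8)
The plan is the standard optimism-based regret decomposition for $\LinUCB$, with two twists: the estimator $\thetacotf_t$ concentrates around $\tau_m\theta$ rather than $\theta$ (which is where the $1/\tau_m$ factor comes from), and the confidence width $\alpha_{t,\delta} := 2f_{t,\delta} + \sum_{s=t-m}^{t-1}\norm{A_s}_{V_t(\lambda)^{-1}}$ carries an extra ``recent actions'' term that must be controlled separately. Throughout I condition on the event of \cref{th:cOTF-concentration}, which holds with probability at least $1-2\delta$ and on which $\norm{\thetacotf_t - \tau_m\theta}_{V_t(\lambda)} \le \alpha_{t,\delta}$ for every $t \le T$.

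\emph{Step 1: optimism and per-round regret.} For any action $a$, Cauchy--Schwarz in the $V_t(\lambda)$-geometry gives $|\inner{a,\thetacotf_t - \tau_m\theta}| \le \norm{a}_{V_t(\lambda)^{-1}}\norm{\thetacotf_t - \tau_m\theta}_{V_t(\lambda)} \le \alpha_{t,\delta}\norm{a}_{V_t(\lambda)^{-1}}$, hence $U_t(a) \ge \tau_m\inner{a,\theta}$ and symmetrically $U_t(a) \le \tau_m\inner{a,\theta} + 2\alpha_{t,\delta}\norm{a}_{V_t(\lambda)^{-1}}$. Since $A_t$ maximises $U_t$, this yields $\tau_m\inner{\theta,A_t^*} \le U_t(A_t^*) \le U_t(A_t) \le \tau_m\inner{\theta,A_t} + 2\alpha_{t,\delta}\norm{A_t}_{V_t(\lambda)^{-1}}$, so the instantaneous regret satisfies $\inner{\theta,A_t^* - A_t} \le (2/\tau_m)\,\alpha_{t,\delta}\norm{A_t}_{V_t(\lambda)^{-1}}$. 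Summing and expanding $\alpha_{t,\delta}$,
\[
R(T,\theta) \le \frac{4}{\tau_m}\sum_{t=1}^T f_{t,\delta}\norm{A_t}_{V_t(\lambda)^{-1}} + \frac{2}{\tau_m}\sum_{t=1}^T \norm{A_t}_{V_t(\lambda)^{-1}}\!\!\sum_{s=t-m}^{t-1}\norm{A_s}_{V_t(\lambda)^{-1}}.
\]

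\emph{Step 2: the two sums.} For the first sum I would bound $f_{t,\delta}\le f_{T,\delta}$ by monotonicity in $t$, apply Cauchy--Schwarz over $t$, and invoke the elliptical potential lemma \citep[Lemma~19.4]{lattimore2019book} together with $\det V_{T+1}(\lambda) \le ((d\lambda+T)/d)^d$ to get $\sum_{t=1}^T \norm{A_t}_{V_t(\lambda)^{-1}}^2 \le 2d\log\frac{d\lambda+T}{d\lambda}$ (standard for $\lambda\ge 1$, which is where that mild assumption is implicitly used); this produces exactly the first term $\frac{4f_{T,\delta}}{\tau_m}\sqrt{2dT\log\frac{d\lambda+T}{d\lambda}}$. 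The second, ``cross'' sum is the genuinely new part. The plan is: (i) decouple each product by AM--GM, $\norm{A_t}_{V_t(\lambda)^{-1}}\norm{A_s}_{V_t(\lambda)^{-1}} \le \tfrac12\norm{A_t}_{V_t(\lambda)^{-1}}^2 + \tfrac12\norm{A_s}_{V_t(\lambda)^{-1}}^2$; (ii) use $V_t(\lambda)\succeq V_s(\lambda)$ for $s\le t$ to replace $\norm{A_s}_{V_t(\lambda)^{-1}}^2$ by the larger $\norm{A_s}_{V_s(\lambda)^{-1}}^2$; and (iii) observe that in the resulting double sum each time index appears in at most $m$ pairs, so the whole cross sum is at most $m\sum_{t=1}^T \norm{A_t}_{V_t(\lambda)^{-1}}^2 \le 2md\log\frac{d\lambda+T}{d\lambda}$ by the same potential lemma, giving the second term $\frac{4md}{\tau_m}\log\frac{d\lambda+T}{d\lambda}$.

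\emph{Main obstacle.} Step 1 and the first sum are routine $\LinUCB$ bookkeeping; the only real work is the cross term, and specifically finding a decoupling that keeps the dependence on $m$ linear rather than quadratic. AM--GM together with the monotonicity of $t\mapsto V_t(\lambda)$ and a careful multiplicity count (each index in $\{1,\dots,T\}$ participates in at most $m$ pairs $(s,t)$ with $t-m\le s\le t-1$) is what achieves this; landing the constant on exactly $4md$ requires being slightly careful about the two halves of the AM--GM split and the edge effects for $t\le m$, but nothing deeper is needed.
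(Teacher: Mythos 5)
Your proposal is correct and follows essentially the same route as the paper's proof: the same optimism argument against $\tau_m\theta$, the same split of the regret into an $f_{t,\delta}$ term and a cross term, and the same treatment of the cross term via AM--GM, the monotonicity $V_s(\lambda)\preceq V_t(\lambda)$, a multiplicity count of at most $m$ pairs per index, and the elliptical potential lemma. Your added remarks about $\lambda\ge 1$ and the edge cases $t\le m$ are minor refinements of details the paper leaves implicit.
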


\begin{proof}
Let $\alpha_{t,\delta} = 2 f_{t,\delta} + \sum_{s=t-m}^{t-1} \norm{A_s}_{V_t(\lambda)^{-1}}$, which is chosen so that the upper confidence bound for action $a$ in round $t$ is
\begin{align*}
U_t(a) = \shortinner{a, \thetacotf_t} + \alpha_{t,\delta} \norm{a}_{V_t(\lambda)^{-1}}\,.
\end{align*}
and $A_t = \argmax_{a \in \cA_t} U_t(a)$. By \cref{th:cOTF-concentration}, with probability at least $1 - 2\delta$
it holds that $\snorm{\thetacotf_t - \tau_m \theta}_{V_t(\lambda)} \leq \alpha_{t,\delta}$ for all $t$. Assume for the remainder that the above event holds. Then
\begin{align*}
\inner{A_t^*, \theta}
&= \frac{1}{\tau_m} \shortinner{\tau_m \theta, A_t^*} \\
&\leq \frac{1}{\tau_m} \left(\shortinner{\thetacotf_t, A_t^*} + \alpha_{t,\delta} \norm{A_t^*}_{V_t(\lambda)^{-1}}\right)
= \frac{U_t(A_t^*)}{\tau_m} \\
&\leq \frac{U_t(A_t)}{\tau_m}
= \frac{1}{\tau_m} \left(\shortinner{\thetacotf_t, A_t} + \alpha_{t,\delta} \norm{A_t}_{V_t(\lambda)^{-1}}\right)\,.
\end{align*}
Therefore the regret in round $t$ is bounded by
\begin{align*}
\inner{A_t^* - A_t, \theta}
&\leq \frac{1}{\tau_m} \shortinner{A_t, \thetacotf_t - \tau_m \theta} + \frac{\alpha_{t,\delta}}{\tau_m} \norm{A_t}_{V_t(\lambda)^{-1}} \\
& \leq \frac{2\alpha_{t,\delta}}{\tau_m} \norm{A_t}_{V_t(\lambda)^{-1}}\,,
\end{align*}
where the second inequality follows from Cauchy-Schwarz.
We now substitute the value of $\alpha_{t,\delta}$ and bound the overall regret by
\begin{multline}
R(T, \theta)
\leq \frac{4}{\tau_m} \sum_{t=1}^T f_{t,\delta} \norm{A_t}_{V_t^{-1}} \\
+ \frac{2}{\tau_m} \sum_{t=1}^T \norm{A_t}_{V_t(\lambda)^{-1}} \sum_{s=t-m}^{t-1} \norm{A_s}_{V_t(\lambda)^{-1}}\,. \label{eq:regret-decomp}
\end{multline}
The first sum is bounded in the same way as the standard setting \cite{abbasi2011improved}:
\begin{align*}
\frac{4}{\tau_m} \sum_{t=1}^T f_{t,\delta} \norm{A_t}_{V_t(\lambda)^{-1}}
&\leq \frac{4f_{n,\delta}}{\tau_m} \sqrt{T \sum_{t=1}^T \norm{A_t}_{V_t(\lambda)^{-1}}^2} \\
&\leq \frac{4f_{n,\delta}}{\tau_m}  \sqrt{2dT \log\left(\frac{d \lambda + T}{d\lambda}\right)}\,,
\end{align*}
where the first inequality follows from Cauchy-Schwarz and the second
from the elliptical potential lemma \citep[Lemma 19.4]{lattimore2019book}.
For the second sum in \cref{eq:regret-decomp} we introduce a new trick. Using the fact that $ab \leq (a^2+b^2)/2$,
\begin{align*}
&\frac{2}{\tau_m} \sum_{t=1}^T \norm{A_t}_{V_t(\lambda)^{-1}} \sum_{s=t-m}^{t-1} \norm{A_s}_{V_t(\lambda)^{-1}} \\
&\leq \frac{1}{\tau_m} \sum_{t=1}^T \sum_{s=t-m}^{t-1}\left(\norm{A_t}_{V_t(\lambda)^{-1}}^2 + \norm{A_s}_{V_t(\lambda)^{-1}}^2\right)  \\
&\leq \frac{1}{\tau_m} \sum_{t=1}^T \sum_{s=t-m}^{t-1}\left(\norm{A_t}_{V_t(\lambda)^{-1}}^2 + \norm{A_s}_{V_s(\lambda)^{-1}}^2\right)  \\
&\leq \frac{2m}{\tau_m} \sum_{t=1}^T \norm{A_t}_{V_t(\lambda)^{-1}}^2
\leq \frac{4md}{\tau_m} \log\left(\frac{d\lambda + T}{d\lambda}\right)\,,
\end{align*}
where in the second inequality we used the fact that for $s \leq t$, $V_s(\lambda) \leq V_t(\lambda)$\footnote{here we denote $A\leq B$ if for any $x\in \R_+^d$, $\|x\|_A \leq \|x\|_B$ }, so that $V_s(\lambda)^{-1} \geq V_t(\lambda)^{-1}$.
Substituting the previous two displays into \cref{eq:regret-decomp} completes the proof.
\end{proof}

\begin{remark}
The choice of $m$ is left to the learner. It influences the bound in two ways: (1) The lower-order term
is linear in $m$, which prevents the user from choosing $m$ very large.
On the other hand, $\tau_m$ is increasing in $m$, which pushes the user in the opposite direction.
Designing an adaptive algorithm that optimizes the choice of $m$ online remains a challenge for the future.
\end{remark}

\paragraph{Lower bound}
We now provide a non-asymptotic minimax lower bound for $K$-armed stochastic Bernoulli bandits showing that in the windowed
setting there is an unavoidable dependence on $\tau_m$.
Note, an asymptotic problem-dependent bound for this setting was already known \cite{vernade2017stochastic}. Although our results are specialized to the finite-armed bandit model, we expect that standard analysis for other action sets should follow along the same lines as \citep{lattimore2019book} (\S24).

\begin{theorem}\label{thm:lower}
For any policy $\pi$ and $K > 1$ and $T \geq 1$ and $\tau_m \in (0,1)$ there exists a $K$-armed Bernoulli bandit such that
$R(T, \theta) \geq c \min\{T, \sqrt{T K / \tau_m}\}$, where $c >  0$ is a universal constant.
\end{theorem}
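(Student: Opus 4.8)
The plan is to reduce the windowed delayed-feedback setting to a standard (undelayed) finite-armed bandit with a \emph{rescaled} gap, and then invoke the classical minimax lower bound for Bernoulli bandits. The key observation is that in the windowed model the learner effectively only ever sees the random variables $\tilde Y_{s,t} = X_s \mathds{1}\{D_s \le m\}$, and once $t \ge s + m$ the observation from round $s$ is frozen at $\tilde Y_{s} := X_s \mathds{1}\{D_s \le m\}$, which is a Bernoulli random variable with mean $\tau_m\, a^\top\theta$ whenever action $a$ was played. So, modulo the at most $m$ ``pending'' observations at any time, the information available to the learner is that of an undelayed Bernoulli bandit whose arm means are $\{\tau_m \mu_i\}$ rather than $\{\mu_i\}$. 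This suggests building the hard instance out of Bernoulli arms with small means so that the rescaling by $\tau_m$ is realizable.

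Concretely, I would take $K$ arms with $A_i^\top \theta = \mu_i$, set all but one arm to mean $\mu$ and one randomly-chosen arm $i^\star$ to mean $\mu + \Delta$, where $\mu$ and $\Delta$ are chosen so that $\mu + \Delta \le 1$ (this is where the constraint $a^\top\theta \in [0,1]$ matters) and $\Delta$ is the usual minimax gap $\Delta \asymp \sqrt{K/T}$, truncated so the bound degrades gracefully to the trivial $\Theta(T)$ regime. The delay distribution $\cD$ is chosen so that $\PP(D_1 \le m) = \tau_m$ exactly — e.g.\ a two-point distribution putting mass $\tau_m$ on $0$ and mass $1-\tau_m$ on $m+1$. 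Under this $\cD$, the frozen observation $\tilde Y_s$ from playing arm $i$ is $\mathrm{Bernoulli}(\tau_m \mu_i)$. I would then run the standard change-of-measure / Pinsker argument (as in \citep[\S15, \S24]{lattimore2019book}): comparing the environment where $i^\star = i$ to the one where $i^\star = j$, the relative entropy of the observation sequences is controlled by $\sum_t \E[N_i(T)]\,\mathrm{KL}\big(\mathrm{Bernoulli}(\tau_m\mu)\,\|\,\mathrm{Bernoulli}(\tau_m(\mu+\Delta))\big)$, and for small means this KL is $O\big(\tau_m \Delta^2 / \mu\big)$ — or, choosing $\mu$ a constant, $O(\tau_m \Delta^2)$. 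Averaging over which arm is optimal and balancing, the regret is at least $c\,\Delta\,T$ subject to $T \tau_m \Delta^2 \lesssim 1$, i.e.\ $\Delta \asymp \min\{1, \sqrt{1/(T\tau_m)}\}$ up to the $K$ factor; carrying the $K$ arms through the standard argument upgrades this to $R(T,\theta) \ge c\min\{T, \sqrt{TK/\tau_m}\}$.

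The one genuinely delicate point — and the step I expect to be the main obstacle — is handling the at most $m$ pending (not-yet-frozen) observations rigorously: the learner at round $t$ sees slightly \emph{more} than the frozen sequence, namely partial information $Y_{s,t}$ for $s \in \{t-m,\dots,t-1\}$, and one must argue this extra information cannot help by more than a lower-order amount. The clean way to do this is to note that the data-processing / chain-rule bound on relative entropy only uses that each \emph{newly revealed bit} $\tilde Y_s$ (the value that the reward from round $s$ ultimately settles to) is conditionally Bernoulli$(\tau_m \mu_{A_s})$ given the past; the intermediate partial observations $Y_{s,t}$ for $t < s+m$ are themselves deterministic-or-Bernoulli functions of $(X_s, D_s)$ whose law is identical under both environments \emph{once we condition on $X_s$}, and $X_s$ is exactly the information that gets revealed. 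More carefully: one bounds $\mathrm{KL}(\PP_{\pi,i}\,\|\,\PP_{\pi,j})$ over the full observation filtration by $\sum_{s} \E_i[\mathds{1}\{A_s \in\{i,j\}\}] \cdot \mathrm{KL}$ of the \emph{pair} $(X_s, D_s)$ under the two reward means, and since $D_s$ is independent of everything and shared, this collapses to the KL between $\mathrm{Bernoulli}(\mu)$ and $\mathrm{Bernoulli}(\mu+\Delta)$ scaled appropriately — wait, this would give back the undelayed bound without the $\tau_m$. The correct accounting must instead use that the learner's \emph{policy} is $\sigma(\{Y_{s,t}\})$-measurable and $Y_{s,t}$ only depends on $X_s$ through the product $X_s\mathds{1}\{D_s\le t-s\}$; so one restricts the change of measure to the garbled channel $X_s \mapsto X_s\mathds{1}\{D_s \le m\}$, whose output KL is the $\mathrm{Bernoulli}(\tau_m\mu)$ vs $\mathrm{Bernoulli}(\tau_m(\mu+\Delta))$ quantity, plus an $O(m)$ additive slack for the boundary rounds that is absorbed into the constant $c$. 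I would present this via a single lemma: \emph{the learner's view is a measurable function of the i.i.d.\ sequence $(\tilde Y_s)_{s\le T}$ together with at most $m$ auxiliary coordinate bits}, then apply the standard bound of \citep[Theorem 15.2 / Exercise 15.4]{lattimore2019book} to that sequence.
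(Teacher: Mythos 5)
Your proposal follows essentially the same route as the paper: the crux in both is that, with a suitable delay distribution, each pull of an arm with mean $\mu$ effectively yields a $\mathrm{Bernoulli}(\tau_m\mu)$ observation, so the per-pull relative entropy is $O(\tau_m\Delta^2)$, and the standard minimax machinery then gives $\Delta \asymp \sqrt{K/(\tau_m T)}$ and regret $c\min\{T,\sqrt{TK/\tau_m}\}$; the paper instantiates this with a two-point comparison (pigeonhole on the least-pulled arm plus the Bretagnolle--Huber inequality and a $\chi^2$ bound $d(\tau_m/2,\tau_m(1/2+2\Delta)) \le 32\tau_m\Delta^2$), whereas you invoke the randomized-optimal-arm/averaging version; these are interchangeable.

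Two small corrections. First, since the theorem quantifies over \emph{all} policies, the excess-delay mass should be placed beyond the horizon (e.g.\ at $T+1$ or $+\infty$), not at $m+1$: with mass at $m+1$ a non-windowed policy eventually observes every conversion and the $\tau_m$ dependence disappears, so your instance would not witness the bound as stated. Second, with that two-point delay law ($\PP(D=0)=\tau_m$, rest beyond the horizon) the ``pending observations'' issue you spend the last paragraph on vanishes entirely: each round's feedback is revealed immediately and equals $X_s\mathds{1}\{D_s=0\}\sim\mathrm{Bernoulli}(\tau_m\mu_{A_s})$, so the standard chain-rule KL decomposition applies verbatim. (Even for a general delay law supported on $\{0,\dots,m\}\cup\{>T\}$ the clean argument is sufficiency: conditionally on converting within the window, the conversion time has law $\PP(D=k)/\tau_m$, which is the same in both environments, so the conversion indicator carries all the discriminating information and the per-pull KL is exactly $d(\tau_m\mu,\tau_m\mu')$.) Your fallback remark that an additive $O(m)$ slack in the KL could be ``absorbed into the constant $c$'' would not be legitimate — it enters through $\exp(-\mathrm{KL})$ and is not controlled by a universal constant — but with the construction above no such slack is needed.
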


Interestingly, the dependence on $\tau_m$ appears in the square root, while in our upper bounds it is not.
We speculate that the upper bound is loose. In fact, were $\tau_m$ known it would be possible to improve our upper bounds by using
confidence intervals based on Bernstein's inequality that exploit the reduced variance that is a consequence of $\tau_m$ being small.
When $\tau_m$ is unknown you might imagine estimating the variance. We anticipate this should be possible, but the complexity of the algorithm and analysis would greatly increase.

\section{Thompson sampling}
\label{sec:ts}
The standard implementation of Thompson sampling for linear bandits without delays and Gaussian noise is to sample
$\tilde \theta_t \sim \cN(\hat \theta_t, V_t(\lambda)^{-1})$
where $\hat \theta_t$ is the usual regularized least squares estimator
\begin{align*}
    \hat \theta_t = V_t(\lambda)^{-1} \sum_{s=1}^{t-1} A_s X_s\,.
\end{align*}
The algorithm then chooses
\begin{align*}
    A_t = \argmax_{a \in \cA_t} \shortinner{a, \tilde \theta_t}\,.
\end{align*}
This algorithm corresponds to Thompson sampling (or posterior sampling) when the prior is Gaussian with zero mean and $\lambda I$ covariance. No frequentist analysis exists for this algorithm, but empirically it performs very well. In the delayed setting the $(X_s)_{s=1}^{t-1}$ are not available to the learner at time $t$. Nevertheless, it is possible to propose a randomized algorithm in the spirit of Thompson sampling.
To motivate our choices, recall that the standard concentration analysis for least squares regression by \cite{abbasi2011improved} shows that with high probability
\begin{align*}
    \snorm{\hat \theta_t - \theta}_{V_t(\lambda)} \leq f_{t,\delta}\,.
\end{align*}
In the delayed setting, on the other hand, \cref{th:cOTF-concentration} shows that
\begin{align}
    \snorm{\thetacotf_t - \theta}_{V_t(\lambda)} \leq f_{t,\delta} + \sum_{s=t-m}^{t-1} \norm{A_s}_{V_s(\lambda)^{-1}} \\
    = f_{t,\delta}\left(1 + \frac{\sum_{s=t-m}^{t-1} \norm{A_s}_{V_s(\lambda)^{-1}}}{f_{t,\delta}}\right)\,.
    \label{eq:ts-delay-conf}
\end{align}
An ansatz guess for a sampling algorithm that uses the delayed least squares estimator is to compute $\thetacotf_t$ and then sample 
\begin{align*}
    \tilde \theta_t \sim \cN\left(\thetacotf_t, \beta_{t,\delta} V_t(\lambda)^{-1}\right)\,,
\end{align*}
where $\beta_{t,\delta} = 1 + (\sum_{s=t-m}^{t-1} \norm{A_s}_{V_t(\lambda)^{-1}})/f_{t,\delta}$.
The choice of $\beta_{t,\delta}$ is rather heuristic. A more conservative choice would be the right-hand side of \cref{eq:ts-delay-conf}. The resulting algorithm roughly corresponds to sampling
from the confidence set used by our optimistic algorithm. Although this sacrifices certain empirical advantages, we expect the analysis techniques by \cite{agrawal2013thompson,abeille2017linear}
could be applied to prove a frequentist regret bound for this algorithm.

\begin{remark}
Algorithms based on adding noise to an empirical estimate are often referred to as `follow the perturbed leader', which has been effectively applied in a variety of settings \cite{abeille2017linear,KSW18}.
An advantage of sampling approaches is that the optimization problem to find $A_t$ is a linear program, which for large structured action sets may be more efficient than finding the arm maximizing an upper confidence bound.
\end{remark}

\begin{remark}
A genuine implementation of Thompson sampling would require a prior on the space of delay distributions as well as the unknown parameter.
We are not hopeful about the existence of a reasonable prior for which computing or sampling from the posterior is efficient.
\end{remark}

\begin{algorithm}
\caption{\OTFLinTS:}
\label{alg:otflints}
\begin{algorithmic}[1]
\STATE {\bf Input:} Window parameter $m > 0$, confidence level $\delta>0$, $\lambda>0$.
\FOR{$t=2,\ldots,T$}
  \STATE Receive action set $\cA_t$
  \STATE Compute width of confidence interval:
  $$\displaystyle \beta_{t,\delta} = 1 + \frac{\sum_{s=t-m}^{t-1} \norm{A_s}_{V_s(\lambda)^{-1}}}{f_{t,\delta}}$$
  \STATE Compute the least squares estimate $\thetacotf_t$ using (\ref{eq:cOTF}) \\[0.3cm]
  \STATE Sample $\tilde \theta_t \sim \cN(\thetacotf_t, \beta_{t,\delta} V_t(\lambda)^{-1})$ \\[0.3cm]
  \STATE Compute action $A_t = \argmax_{a \in \cA_t} \shortinner{a, \tilde \theta_t}$ \\[0.3cm]
  \STATE Play $A_t$ and receive observations
\ENDFOR
\end{algorithmic}
\end{algorithm}

\section{Related Work}
\label{sec:related}

Delays in the environment response is a frequent phenomenon that may take many different forms and should be properly modelled to design appropriate decision strategies. For instance, in an early work on applications of bandit algorithms to clinical trials, \cite{eick1988two} uses `delays' to model the survival time of the patients, in which case delays are the reward rather than external noise, which is a radically different problem to ours. In the same vein, in the sequential stochastic shortest path problem \cite{talebi2017stochastic}, the learner aims at minimising the routing time in a network.

Another example is parallel experimentation, where delays force the learner to make decisions under temporary uneven information. For instance,
\cite{desautels2014parallelizing,grover2018best} consider the problem of running parallel experiments that do not all end simultaneously. They propose a Bayesian way of handling uncertain outcomes to make decisions: they sample \emph{hallucinated} results according to the current posterior. The related problem of gradient-based optimization with delayed stochastic gradient information is studied by \cite{agarwal2011distributed}.

In online advertising, delays are due to the natural latency in users' responses. However, in many works on bandit algorithm, delays are ignored as a first approximation.
In the famous empirical study of Thompson sampling \cite{chapelle2011empirical}, a section is dedicated to analyzing the impact of delays on either Thompson sampling or $\LinUCB$.
While this is an early interest for this problem, they only consider fixed, non-random, delays of 10, 30 or 60 minutes.
Similarly, in \cite{mandel2015towards}, the authors conclude that randomized policies are more robust to this type of latencies.
The general problem of online learning under \emph{known} delayed feedback is addressed in \cite{joulani2013online}, including full information settings and partial monitoring, and we refer the interested reader to their references on those topics. The most recent and closest work to ours is \cite{zhou2019learning}.
The main difference with our approach is that they make strong assumptions on the distribution of the delays, while not having any censoring of the feedback.
In that sense their problem is easier than ours because delays are fully observed. Nonetheless, the key idea of their algorithm is reminiscent to ours: they inscrease the exploration bonus by a quantity that corresponds to the amount of missing data at each round, which is observable in their case, not in ours.
Recent work \cite{li2019bandit} address the case of unknown delays.
The idea of ambiguous feedback, where delays are partially unknown is introduced in \cite{vernade2017stochastic}.

Many models of delays for online advertising have been proposed to estimate conversion rates in an offline fashion: e.g. \cite{yoshikawa2018nonparametric} (non-parametric) or \cite{chapelle2014modeling, DiemertMeynet2017} (generalized linear parametric model).

An alternative, harder model relies only on anonymous feedback \cite{cesa2018nonstochastic,pike2017bandits,arya2019randomized}: the rewards, when observed, cannot be directly linked to the action that triggered them in the past, and the learner has to deal with mixing processes. Finally, a recent full-information setting \cite{mann2018learning} suggests to incorporate intermediate feedback correlated with the rewards.

\section{Experiments}
\label{sec:experiments}

\begin{figure*}[]
\centering
\includegraphics[width=0.32\textwidth]{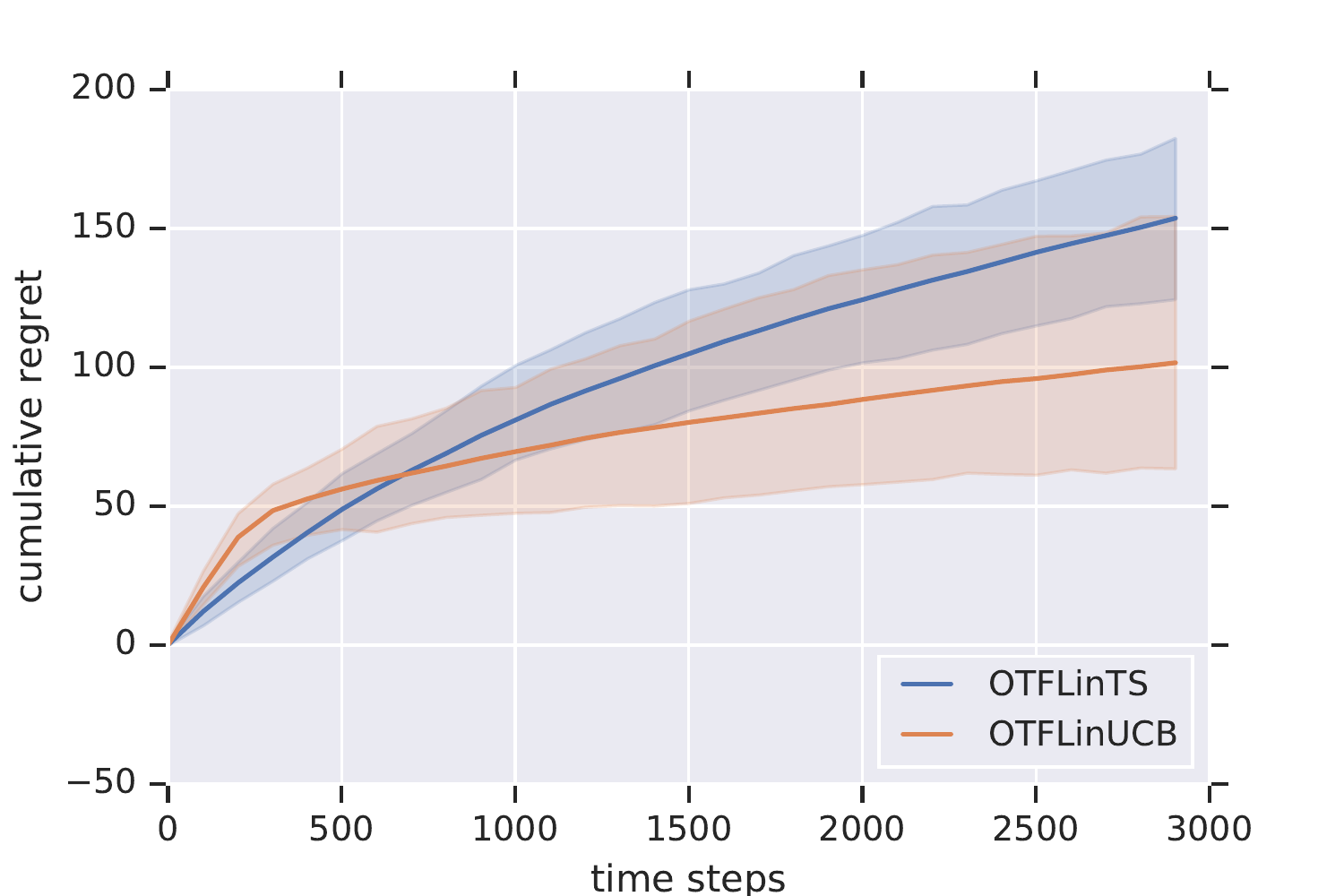}
\includegraphics[width=0.32\textwidth]{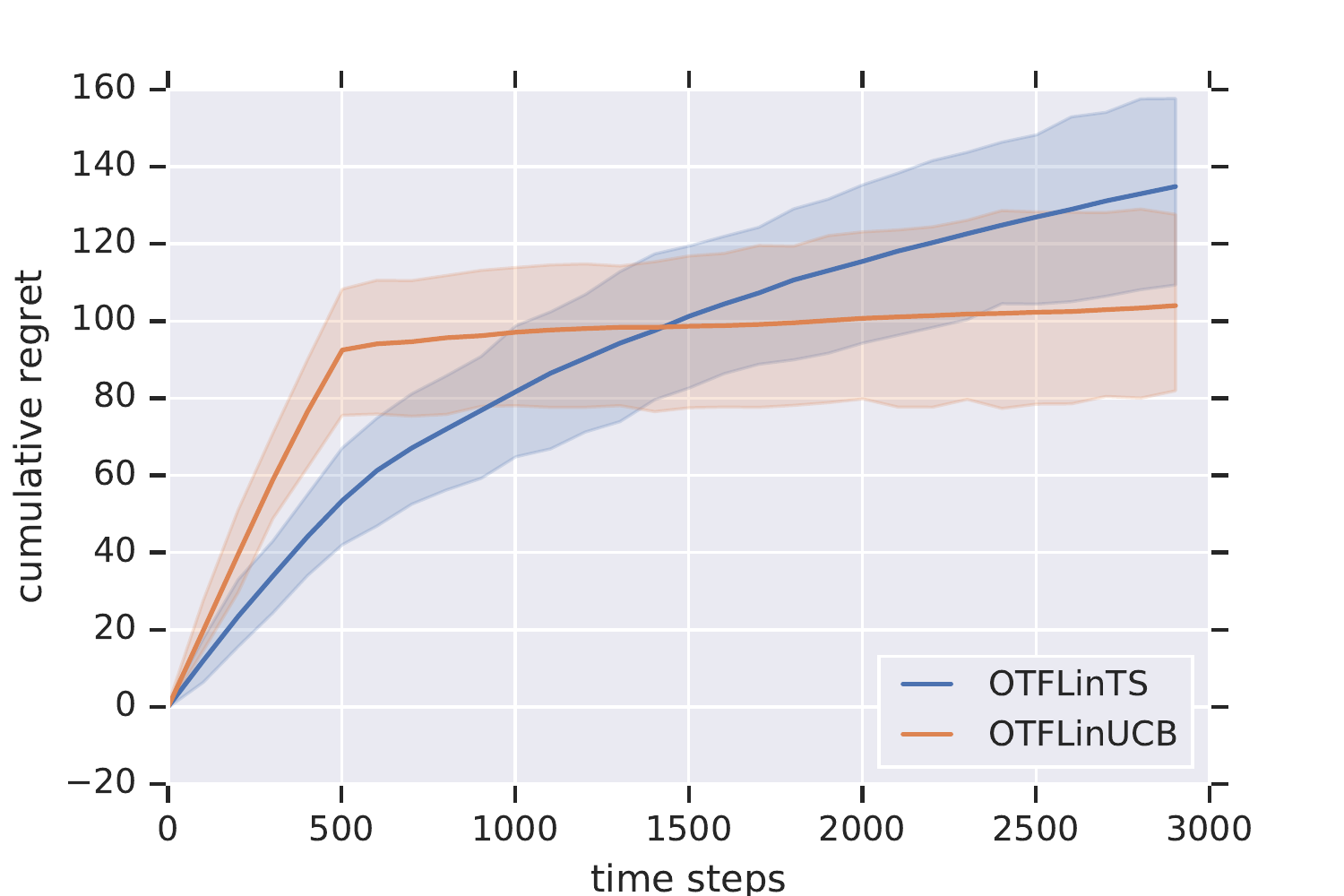}
\includegraphics[width=0.32\textwidth]{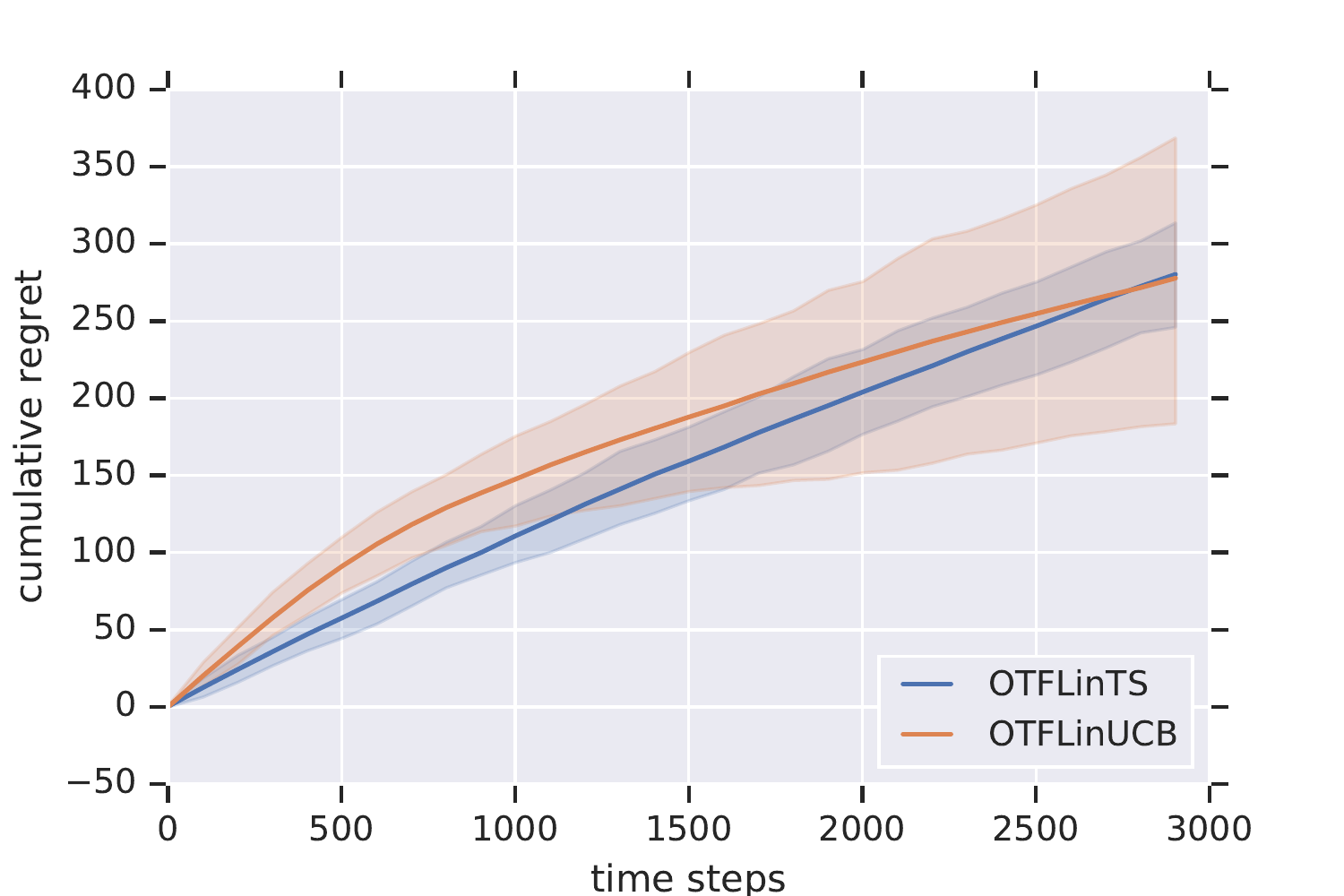}
\caption{Results of our simulations. From left to right, the plots
report the performance of the two algorithms with $(m,\mu)=\{(100,100), (500, 100), (100,500)\}$.Results are averaged over 100 independent runs.  \label{fig:exp-results} }
\end{figure*}

In this section we illustrate the two realistic settings handled by this work.
The first case below, with geometrically distributed delays,
corresponds to the empirical study done by \cite{chapelle2014modeling},
already reproduced in simulations by \cite{vernade2017stochastic}.
The second case, with arbitrary heavy-tailed delays, corresponds to
another use-case we extracted from the data released by \cite{DiemertMeynet2017}.\footnote{The code for all data analysis and simulations is available at https://sites.google.com/view/bandits-delayed-feedback}

\begin{remark}
  To the best of our knowledge, there is no competitor for this problem.
  In particular, despite the similarity of the algorithm DUCB of \cite{zhou2019learning} with ours,
  it cannot be implemented when delays are not observed. Specifically, DUCB, maintains a quantity $G_t$
  which is equal to the exact amount of missing data (delayed feedback not converted yet).
  In our case, this quantity is not observable. The same comment applies for the QPM-D algorithm of \cite{joulani2013online}
  and similar queue-based approaches. On the other end, existing algorithms for unknown delays are not derived for linear bandits with arbitrary action sets.
\end{remark}

\begin{figure}[hbt]
  \begin{center}
  \includegraphics[width=0.35\textwidth]{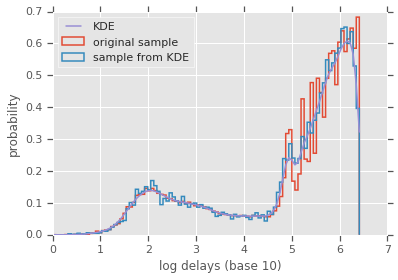}
  \caption{Empirical distribution of the $\log_{10}$ delays.  \label{fig:criteo-experiments} }
  \vspace{-0.5cm}
  \end{center}
\end{figure}

\paragraph{Well-behaved delays} In the datasets analyzed\footnote{The datasets were not released.} by \cite{chapelle2014modeling}, delays are empirically shown to have an exponential decay. As in \cite{vernade2017stochastic}, we run realistic simulations based on the orders of magnitude provided in their study.
We arbitrarily choose $d=5$, $K=10$.
We fix the horizon to $T=3000$, and we choose a geometric delay distribution with mean $\mu=\mathds{E}[D_t] \in \{100, 500\}$.
In a real setting, this would correspond to an experiment that lasts 3h, with average delays of 6 and  30  minutes\footnote{Note that in \cite{chapelle2014modeling}, time is rather measured in hours and days.} respectively.
The online interaction with the environment is simulated: we fix $\theta = \{1/\sqrt{d}, \ldots, 1/\sqrt{d}\}$ and at each round we sample and normalize $K$ actions from $\{0,1\}^d$. All result are averaged over 50 independent runs.
We show three cases on Figure~\ref{fig:exp-results}: $(m,\mu)=(100,100)$ ($\tau_m=0.63$) when delays are short and even though the timeout is set a bit too low, the algorithm manages to learn within a reasonable time range; $(m,\mu)=(500,100)$ ($\tau_m=0.993$) when the window parameter is large enough so almost all feedback is received; and $(m,\mu)=(100,500)$ when the window is set too low and the regret suffers from a high $1/\tau_m = 5.5 $.
Note that the notion of `well-tuned` window is dependent on the memory constraints one has. In general, the window is a system parameter that the learner cannot easily set to their convenience.
In all three cases, $\OTFLinUCB$ performs better than $\OTFLinTS$, which we believe is due to the rough posterior approximation.
Comparing the two leftmost figures, we see the impact of increasing the window size: the log regime starts earlier but the asymptotic regret is similar -- roughly $R(T)=100$ in both cases. The rightmost figure, compared with the leftmost one, shows the impact of a high level of censoring. Indeed, with  $1/\tau_m = 5.5 $, only few feedback are received within the window and the algorithm will need much longer to learn. This causes the regret increase, and also explains why at $T=3000$, both algorithms are still in the linear regime.

\begin{figure}[hbt]

  \begin{center}
  \includegraphics[width=0.40\textwidth]{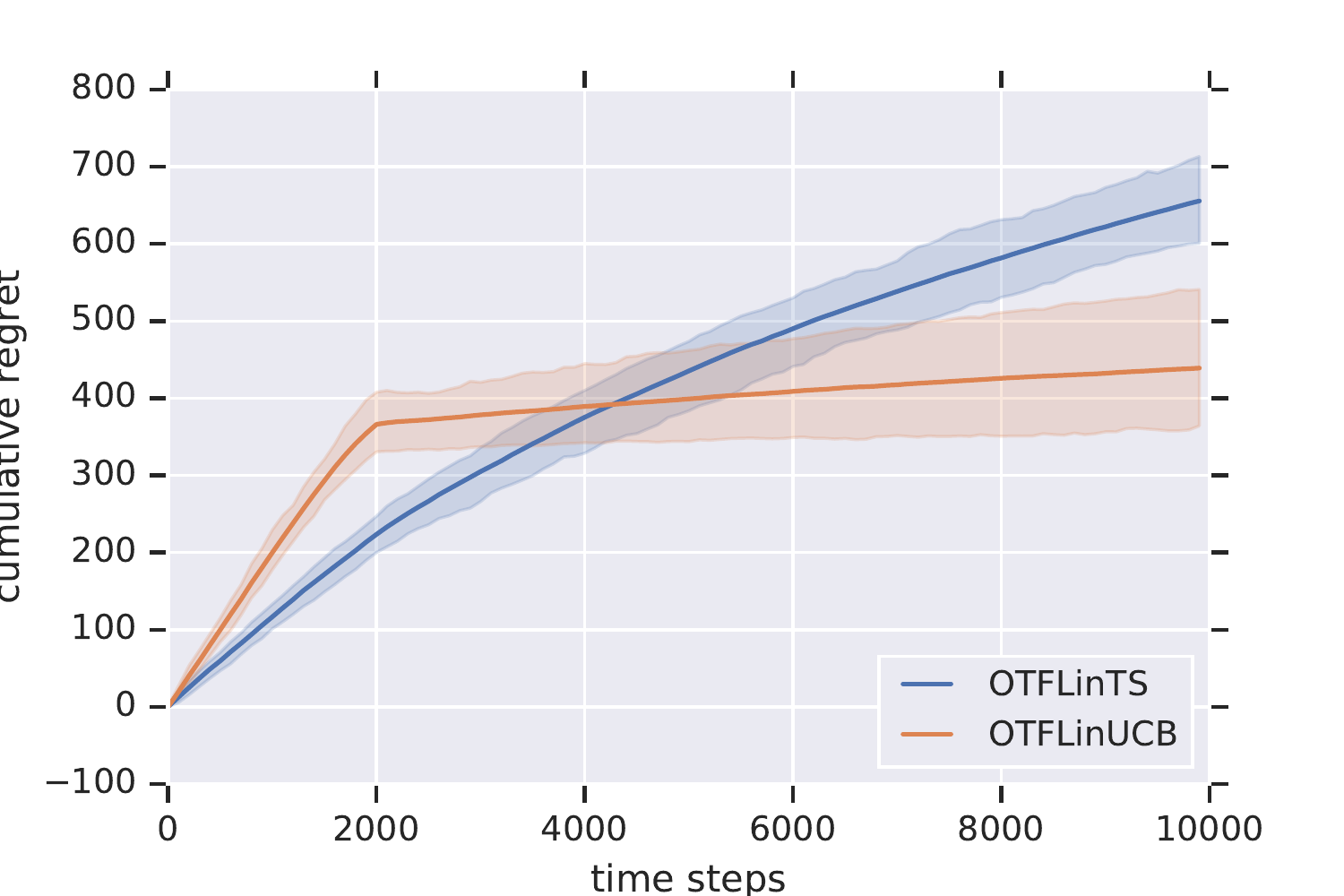}
  \caption{Simulation with realistic delays and $m=2000$, sampled according to the distribution of Figure~\ref{fig:criteo-experiments}. \label{fig:criteo-experiments-results} }
  \vspace{-0.5cm}
  \end{center}

\end{figure}

\paragraph{Heavy-tailed delays.}

Interestingly, the more recent dataset released by \cite{DiemertMeynet2017}\footnote{https://ailab.criteo.com/criteo-attribution-modeling-bidding-dataset/}
features heavy-tailed delays, despite being sourced from a similar online marketing problem in the same company. We run simulations in the same setting as before,
but this time we generate the delays according to the empirical delay distribution extracted from the dataset. On  Figure~\ref{fig:criteo-experiments}, we show the distribution of the log-delays fitted with a gaussian kernel using the Scipy library.
A high proportion of those delays are between $10^5$ and $10^6$ so we rescaled them by a factor $0.01$ to maintain the experiment horizon to the
reasonable value of $T=10^4$. Nevertheless, many rewards will not be observed within the horizon. We compare $\OTFLinUCB$ and $\OTFLinTS$
with window parameter $m=2000$. The results are shown on Figure~\ref{fig:criteo-experiments-results} and show that $\OTFLinUCB$ is able to cope with heavy-tailed delays reasonably well. It is interesting to observe that the behavior of both algorithms is similar to the central plot of Figure~\ref{fig:exp-results}, when the window is larger than the expectation of the delays.

\section{Discussion}
\label{sec:discussion}

We introduced the delayed stochastic linear bandit setting and proposed two algorithms. The first uses the optimism principle in combination with ellipsoidal
confidence intervals while the second is inspired by Thompson sampling and follow the perturbed leader.

There are a number of directions for future research, some of which we now describe.

\paragraph{Improved algorithms}
Our lower bound suggests the dependence on $\tau_m$ in \cref{thm:upper} can be improved. Were this value known we believe that using a concentration analysis that
makes use of the variance should improved the dependence to match the lower bound.
When $\tau_m$ is not known, however, the problem becomes more delicate. One can envisage various estimation schemes, but the resulting algorithm and analysis are likely
to be rather complex.

\paragraph{Generalized linear models}
When the rewards are Bernoulli it is natural to replace the linear model with a generalized linear model.
As we remarked already, this should be possible using the machinery of \cite{filippi2010parametric}.

\paragraph{Thompson sampling}
Another obvious question is whether our variant of Thompson sampling/follow the perturbed leader admits a regret analysis.
In principle we expect the analysis by \cite{agrawal2013thompson} in combination with our new ideas in the proof of \cref{thm:upper} can be combined
to yield a guarantee, possibly with a different tuning of the confidence width $\beta_{t,\delta}$.
Investigating a more pure Bayesian algorithm that updates beliefs about the delay distribution as well as unknown parameter is also a fascinating open question, though
possibly rather challenging.

\paragraph{Refined lower bounds}
Our current lower bound is proven when $\cA_t = \{e_1,\ldots,e_d\}$ is the standard basis vectors for all $t$.
It would be valuable to reproduce the results where $\cA_t$ is the unit sphere or hypercube, which should be a straightforward adaptation of the results in \citep[\S24]{lattimore2019book}.

\section*{Acknowledgements}
This work was started when CV was at Amazon Berlin and at OvGU Magdeburg, working closely with AC, GZ, BE and MB.
 The work of A. Carpentier is partially supported by the Deutsche Forschungsgemeinschaft (DFG) Emmy Noether grant MuSyAD (CA 1488/1-1), by the DFG - 314838170, GRK 2297 MathCoRe, by the DFG GRK 2433 DAEDALUS (384950143/GRK2433), by the DFG CRC 1294 'Data Assimilation', Project A03,  by the UFA-DFH through the French-German Doktorandenkolleg CDFA 01-18 and by the UFA-DFH through the French-German Doktorandenkolleg CDFA 01-18 and by the SFI Sachsen-Anhalt for the project RE-BCI.
Major changes and improvements were made thanks to TL at DeepMind later on.
CV wants to thank Csaba Szepesv\'ari for his useful comments and discussions, and Vincent Fortuin for precisely reading and commenting.

\bibliographystyle{icml2020}
\bibliography{refsbandits}

\newpage
\onecolumn
\ifsup

\appendix

\section{Proof of Theorem~\ref{thm:lower}}

For $p, q \in (0,1)$ let $d(p,q) = p \log(p/q) + (1 - p) \log((1-p)/(1-q))$ be the relative entropy between Bernoulli distributions with biases $p$ and $q$ respectively. For $\theta \in [0,1]^K$ let $\E_\theta$ denote the expectation when the algorithm interacts with the Bernoulli bandit determined by $\theta \in [0,1]^K$.
Let $\theta = (1/2 + \Delta, 1/2,\ldots,1/2)$ where $\Delta \in (0,1/4)$ is some parameter to be tuned subsequently.
Then let
\begin{align*}
    i = \argmin_{k > 1} \E_{\theta}[N_k(T)]\,.
\end{align*}
By the pigeonhole principle it follows that $\E_{\theta}[N_i(T)] \leq T/(K-1)$. Then define
$\phi \in [0,1]^K$ so that $\phi_j = \theta_j$  for all $j \neq i$ and $\phi_i = 1/2 + 2\Delta$.
By the definitions of $\theta$ and $\phi$ we have
\begin{align*}
R_\theta(T) \geq \Delta (T - \E_\theta[N_1(T)]) \quad \text{and} \quad
R_\phi(T) \geq \Delta \E_\phi[N_1(T)]\,,
\end{align*}
which means that
\begin{align*}
R_\theta(T) \geq \frac{T \Delta}{2} \PP_\theta(N_1(T) \leq T/2) \quad \text{and} \quad
R_\phi(T) \geq \frac{T \Delta}{2} \PP_\phi(N_1(T) > T/2) \,.
\end{align*}
Summing the two regrets and applying the Bretagnolle-Huber inequality shows that
\begin{align*}
    R_\theta(T) + R_\phi(T)
    &\geq \frac{T \Delta}{2} \left(\PP_\theta(N_1(T) \leq T/2) + \PP_\phi(N_1(T) > T/2)\right) \\
    &\geq \frac{T \Delta}{4} \exp\left(-KL(\PP_\theta, \PP_\phi)\right)\,.
\end{align*}
The next step is to calculate the relative entropy between $\PP_\theta$ and $\PP_\phi$.
Both bandits behave identically on all arms except action $i$. When action $i$ is played the learner effectively observes a reward with bias either $\tau_m / 2$ or $\tau_m(1/2+2 \Delta)$. Therefore
\begin{align*}
KL(\PP_\theta, \PP_\phi) = \E_\theta\left[N_i(T)\right] d(\tau_m / 2, \tau_m(1/2+2\Delta))\,.
\end{align*}
Upper bounding the relative entropy by the $\chi$-squared distance shows that
\begin{align*}
d(\tau_m/2, \tau_m (1/2 + 2\Delta))
\leq \frac{2\left(\tau_m / 2 - \tau_m(1/2 + 2\Delta)\right)^2}{\tau_m(1/2 - 2\Delta)}
\leq 32 \tau_m \Delta^2\,,
\end{align*}
where we used the assumption that $2\Delta \leq 1/4$.
Therefore
\begin{align*}
    KL(\PP_\theta, \PP_\phi) \leq 32 \tau_m \Delta^2 \E_\theta[N_i(T)]
    \leq \frac{32 \tau_m \Delta^2 T}{K-1}\,.
\end{align*}
Finally we conclude that
\begin{align*}
    R_\theta(T) + R_\phi(T) \geq \frac{T \Delta}{4} \exp\left(-\frac{32 \tau_m \Delta^2 T}{K-1}\right)\,.
\end{align*}
The result follows by tuning $\Delta$.

\fi

\end{document}